\documentclass[journal]{IEEEtai}

\usepackage[colorlinks,urlcolor=blue,linkcolor=blue,citecolor=blue]{hyperref}

\usepackage{color,array}

\usepackage{graphicx}
\usepackage{amssymb}
\usepackage{amsmath}
\usepackage{amsthm}
\usepackage{algorithm}
\usepackage{algpseudocode}
\usepackage{booktabs} 
\usepackage{cite}
\usepackage{multirow}
\usepackage{tabularx}

\newtheorem{assumption}{Assumption}

\usepackage{booktabs,longtable,pdflscape,multirow}

\newtheorem{theorem}{Theorem}

\begin{document}

\title{\fontsize{22}{28}\selectfont Certified Uncertainty Propagation in One-Shot Federated Bayesian Models via Posterior Event Transport} 
 
\author{Mahyar Mohammadi, Mohammad Hossein Badiei, Abolfazl Yaghmaei, \and Hamed Kebriaei {$^\dagger$}, \IEEEmembership{Senior Member, IEEE}
\thanks{{$^\dagger$} Corresponding author: Hamed Kebriaei.}
\thanks{M. Mohammady, MH. Badiei, A. Yaghmaei and H. Kebriaei are with the School of Electrical and
Computer Engineering, College of Engineering, University of Tehran, Tehran, 1417614411, Iran, (email:
\href{mailto://mahyar.mohammady@ut.ac.ir}{mahyar.mohammady@ut.ac.ir}; \href{mailto://mh.badiei@ut.ac.ir}{mh.badiei@ut.ac.ir}; \href{mailto://yaghmaei@ut.ac.ir}{yaghmaei@ut.ac.ir};\href{mailto://kebriaei@ut.ac.ir} {kebriaei@ut.ac.ir}).
}
\thanks{H. Kebriaei is also with the School of Computer Science, Institute for Research in Fundamental Sciences (IPM), P.O. Box
19395-5746, Tehran, Iran. The work of Hamed Kebriaei was supported in part by the Institute for
Research in Fundamental Sciences (IPM) under Grant CS 1404-04-190.}}

\maketitle
\begin{abstract}
Probabilistic certification of Bayesian neural networks lower-bounds the posterior probability that a model satisfies a verifier-defined safety property. In one-shot federated Bayesian learning, however, the deployed model is obtained by aggregating parameters drawn from client-specific posterior distributions, and local certificates therefore do not directly provide safety guarantees for the aggregated model. This paper develops a deployment-consistent certification framework by propagating local posterior events through the deployment aggregation rule, with an exact geometric characterization for Federated Averaging(FedAvg). Each client constructs pairwise-disjoint hyper-rectangular regions in parameter space and computes their posterior probability masses. The server forms Cartesian products of these regions, maps them through the deployment rule, and retains the probability of a product event only when its complete aggregation image is verified to satisfy the prescribed safety property. Under independent client posteriors, the probability of each product event factorizes into the product of its local masses, and summing the probabilities of verified disjoint events yields a sound lower bound on the safety probability of the deployed model. For FedAvg with nonnegative aggregation coefficients, the image of a Cartesian product of axis-aligned hyper-rectangles is exactly represented by a weighted hyper-rectangle, introducing no additional set over-approximation during aggregation. We further distinguish the proposed transported-event certificate from direct certification under the posterior distributions induced by FedAvg and Product-of-Gaussians aggregation. We evaluate the framework on MNIST and Fashion-MNIST under label-Dirichlet heterogeneity as controlled one-shot federated benchmarks. The transported FedAvg certificate remains non-vacuous across all evaluated settings, ranging from \(22.51\%\) to \(46.89\%\), while direct global certificates under the evaluated global
posterior laws range from \(72.05\%\) to \(91.39\%\). The results further show that predictive accuracy and certifiable safety do not necessarily follow the same trend, and that different global posterior constructions can exhibit different certification behavior across architectures.
\end{abstract}

\begin{IEEEImpStatement}
This work advances trustworthy federated artificial intelligence by providing deployment-consistent safety guarantees for
one-shot Bayesian model aggregation. Existing local certificates do not directly characterize the safety of the final global model because aggregation changes the relevant probability distribution. The proposed framework closes this gap by transporting certified local posterior events through the actual deployment rule and assigning probability mass only to aggregation outcomes that are verified safe. The resulting guarantees are formulated for deployment aggregation rules and are exact at the set-propagation stage for FedAvg with nonnegative weights. By separating transported certification from direct certification under FedAvg and Product-of-Gaussians distributions, the framework also clarifies which safety probability each method certifies. The empirical analysis provides controlled insight into how heterogeneity, model architecture, client count, and verifier conservatism influence certifiable safety in one-shot federated settings.
\end{IEEEImpStatement}

\begin{IEEEkeywords}
Uncertainty propagation, Bayesian neural networks, one-shot federated learning, probabilistic safety.
\end{IEEEkeywords}

\section{Introduction}

Federated learning (FL) enables multiple clients to train a shared model without centralizing their raw data, making it suitable for applications constrained by privacy, governance, and communication requirements. However, federated learning must operate under statistical heterogeneity, since clients often possess limited and non-identically distributed datasets. These challenges are particularly significant in one-shot FL, where each client communicates with the server only once and the deployed model is determined by a single aggregation step without further refinement 
\cite{mcmahan2017communication,kairouz2021advances,guha2019one}.

Bayesian neural networks (BNNs) provide a principled representation of epistemic uncertainty by maintaining posterior distributions over model parameters rather than single point estimates \cite{kendall2017uncertainties,blundell2015weight}. In federated settings, this allows each client to represent uncertainty induced by its local data. Nevertheless, uncertainty quantification alone does not provide a formal safety guarantee. For safety-critical applications, it is necessary to determine the probability that a model drawn from the distribution governing deployment satisfies a prescribed input–output property \cite{zhang2022personalized}.

These challenges are part of the broader effort toward trustworthy artificial
intelligence, where reliability, robustness, uncertainty, and certification
are considered essential properties of deployed AI systems \cite{rawal2021recent,pfau2025engineering}.
Probabilistic certification addresses this requirement by lower-bounding the posterior probability of models satisfying a verifier-defined safety specification. Existing methods for centralized BNNs construct verified regions in weight space and accumulate their posterior probability masses \cite{wicker2020probabilistic,wicker2024adversarial,batten2024tight}. These methods typically employ sound verification techniques, such as interval bound propagation or abstract interpretation, to certify bounded regions of inputs and model parameters \cite{gowal2018effectiveness,mirman2018differentiable}.  In the centralized setting, the relevant probability is defined under a single posterior over the deployed model.

The certification problem is fundamentally different in one-shot federated Bayesian learning. Each client learns a distinct local posterior, while the deployed model is produced by applying a server-side aggregation rule to jointly sampled client parameters. Consequently, the relevant deployment distribution depends on both the local posteriors and the aggregation map. Local certified probabilities cannot, in general, be directly summed or averaged to obtain a valid global guarantee. A local certificate describes an event in one client's parameter space, whereas deployment depends on a joint realization across all clients. Moreover, even when local parameter regions are individually safe, their aggregate may not remain safe because neural-network safety regions in weight space are generally nonconvex and need not be preserved by aggregation.

In this paper, we develop a deployment-consistent framework for probabilistic safety certification in one-shot federated Bayesian learning. Each client constructs pairwise-disjoint hyper-rectangular regions in parameter space and evaluates their posterior probability masses. The server forms joint events from combinations of these local regions, propagates them through the aggregation rule, and includes their probability mass only when the complete aggregation image is verified to satisfy the prescribed safety property. Under independent client posteriors, the probability of each joint event factorizes into the product of its local posterior masses. Summing the masses of verified disjoint events then yields a sound lower bound on the safety probability of the deployed model.
For FedAvg with nonnegative aggregation weights, the image of a Cartesian product of axis-aligned hyper-rectangles is represented exactly by the weighted combination of their endpoints. Therefore, the aggregation step introduces no additional set over-approximation, and the conservatism of the certificate arises only from finite posterior coverage and the underlying verification procedure. We also distinguish the proposed transported-event certificate from direct server-side certification. Direct certification under the FedAvg pushforward distribution targets the same deployment safety probability but may cover different regions of global parameter space. In contrast, Product-of-Gaussians aggregation defines a separate posterior fusion rule and is therefore treated as a distinct global-posterior baseline. 

The main contributions of our work are as follows:

\begin{itemize}
\item We formulate probabilistic safety for one-shot federated BNNs under the aggregation-induced distribution of the deployed model and clarify why local posterior certificates do not directly imply global safety.

\item We derive a sound local-to-global lower bound by propagating disjoint joint posterior events through the deployment aggregation rule and certifying their complete aggregated images.

\item We establish an exact hyper-rectangle-image characterization for FedAvg with nonnegative aggregation weights and evaluate transported and direct certificates on MNIST and Fashion-MNIST as controlled benchmarks for the communication-constrained one-shot setting.

\end{itemize}

\section{Related Work}
\label{sec:related_work}

Federated learning constructs a global model from locally trained client models without centralizing raw data, with FedAvg remaining one of the most widely used parameter-aggregation methods \cite{mcmahan2017communication,kairouz2021advances}. Bayesian extensions of federated learning replace or complement point estimates with posterior distributions to represent uncertainty caused by limited and heterogeneous client data. Recent studies have explored Bayesian formulations of federated learning to
improve uncertainty representation and robustness under heterogeneous client
data distributions \cite{yu2025pfedbl,li2025federated}. Bayesian nonparametric federated learning addresses permutation inconsistencies among independently trained neural networks by matching and merging local network components \cite{yurochkin2019bayesian}, while posterior averaging interprets federated aggregation as approximate Bayesian inference \cite{al2020federated}. FedBE instead fits a server-side distribution over client models and uses samples from this distribution for Bayesian model ensembling \cite{chen2020fedbe}. More recent approaches combine predictive distributions, layer-wise posterior approximations, or local Laplace approximations to improve predictive accuracy and uncertainty calibration under statistical heterogeneity \cite{hasan2024calibrated,liu2024fedlpa}. Although these methods provide posterior-aware global predictors, they do not certify the probability that the random model produced by the deployment rule satisfies a verifier-defined safety specification.

This limitation is particularly relevant in one-shot federated learning, where each client communicates with the server only once and the aggregation rule directly determines the deployed model \cite{guha2019one}. Existing one-shot methods primarily use parameter aggregation, ensembling, posterior approximation, or knowledge distillation to construct an accurate global predictor from independently trained client models. In Bayesian one-shot learning, heterogeneous local datasets induce distinct posterior distributions that must be combined without subsequent communication or correction. Prior work therefore focuses mainly on predictive performance, calibration, and uncertainty representation. In contrast, our objective is to characterize the probability that the model induced by a general one-shot aggregation map satisfies a prescribed safety property.

Probabilistic verification of Bayesian neural networks addresses safety by measuring the posterior probability assigned to parameter configurations satisfying a specified input--output property. Wicker et al. derive sound lower bounds by constructing verified regions in weight space and integrating their posterior probability masses \cite{wicker2020probabilistic}. Related studies develop statistical robustness guarantees \cite{cardelli2019statistical}, unified lower and upper probabilistic certificates \cite{wicker2024adversarial}, tighter region-based bounds \cite{batten2024tight}, and dynamic-programming methods for Bayesian neural-network robustness \cite{adams2023bnn}. These approaches commonly rely on sound verification techniques, including interval bound propagation and abstract interpretation, to certify bounded sets of inputs and model parameters \cite{gowal2018effectiveness,mirman2018differentiable}. However, they generally assume a single posterior distribution over the deployed model and do not directly address deployment mechanisms that aggregate random parameters drawn from several client-specific posteriors.

Our work focuses on certifying the global model obtained from local Bayesian posteriors, rather than certifying each local posterior separately. For FedAvg, we consider both transported and direct certification under the induced global distribution. Product-of-Gaussians is included as a separate fusion baseline, allowing us to distinguish between different notions of global Bayesian safety.

\section{Methodology}
\label{sec:methodology}

Consider a one-shot federated learning system comprising $n$ clients. Client $i \in \{1, \dots, n\}$ holds a private dataset $\mathcal{D}_i$ and independently trains a local Bayesian neural network. Following the local training phase, the posterior representation is transmitted from each client to a central server, limited to a single occurrence. The server then constructs the deployed global model through a single aggregation step. As an initial step, the derivation is presented for a single safety property. The same procedure is independently applied to every evaluated property, and the resulting certified probabilities are subsequently averaged.

\subsection{Safety Certification Objective}
\label{subsec:safety_objective}

For the classification-robustness property considered in the experiments, let $x \in [0, 1]^p$ be a reference input of dimension $p$ with class label $c$, and define the admissible perturbation region as
$
\mathcal{X}
:=
\left\{
x'\in[0,1]^p:
\left\lVert x'-x\right\rVert_{\infty}
\leq
\epsilon
\right\},
\label{eq:input_region}
$
    where ($\epsilon \ge 0$) denotes the input perturbation radius. A parameter vector ($\theta \in \mathbb{R}^{n_w}$), where ($n_w$) denotes the total number of trainable parameters in the neural network, is considered safe if the following condition holds.(For compactness, we denote the vector of all ($K$) class-specific outputs by ($f_\theta(x') := [f_{\theta,1}(x'),\ldots,f_{\theta,K}(x')]^{\mathsf T}\in\mathbb{R}^{K})$)
\begin{equation}
C_{\mathrm{s}}f_{\theta}(x')
+
d_{\mathrm{s}}
\succeq
0,
\qquad
\forall x'\in\mathcal{X},
\label{eq:safety_property}
\end{equation}
where $C_s \in \mathbb{R}^{n_s \times K}$ and $d_s \in \mathbb{R}^{n_s}$ define a set of  $n_s$ linear inequality constraints on the $K-dimensional$ classification logits, with the inequality understood in the component-wise sense. Accordingly, the safe parameter set $\mathcal{S}$ consists of all network
parameters whose outputs satisfy the above constraints for every input in
$\mathcal{X}$.
This constraint is equivalent to requiring that the logit of the true class $c$, denoted $f_{\theta,c}(x')$, exceeds the logit of any competing class $r$, denoted $f_{\theta,r}(x')$, by at least a margin $\delta \ge 0$:
\begin{equation}
f_{\theta,c}(x')
-
f_{\theta,r}(x')
\geq
\delta,
\qquad
\forall x'\in\mathcal{X},
\quad
\forall r\neq c.
\label{eq:equivalent_margin_constraint}
\end{equation}
Let $\Theta_i$ denote a random weight vector drawn from the posterior of client $i$. The server applies an aggregation rule
\begin{equation}
\Theta_g
=
\mathcal{A}
(\Theta_1,\ldots,\Theta_n),
\label{eq:general_aggregation}
\end{equation}
where $\Theta_g$ represents the deployed random global model. The global safety probability of interest is therefore
\begin{equation}
P_{\mathrm{safe}}^{g}
:=
\Pr
\left(
\Theta_g\in\mathcal{S}
\right).
\label{eq:global_safety_goal}
\end{equation}
The objective is to construct a computable certificate $L_{safe}^g$ such that
\begin{equation}
L_{\mathrm{safe}}^{g}
\leq
P_{\mathrm{safe}}^{g}.
\label{eq:lower_bound_objective}
\end{equation}

\subsection{Local Bayesian Posterior}
\label{subsec:local_posterior}

Each client $i$ minimizes a task-specific loss function $l(\theta_i; \mathcal{D}_i)$ to obtain a local maximum a posteriori (MAP) estimate, denoted by $\mu_i$. Parameter uncertainty is represented by a Gaussian posterior:
\begin{equation}
q_i(\theta_i)
=
p(\theta_i\mid\mathcal{D}_i)
\approx
\mathcal{N}
\left(
\mu_i,\Sigma_i
\right).
\label{eq:local_gaussian_posterior}
\end{equation}
The Gaussian approximation may be obtained using a Laplace approximation,
variational inference, or another posterior approximation method. The
certification principle developed below is not restricted to a particular
Bayesian training procedure.In the implementation, a diagonal covariance is used:
\begin{equation}
\Sigma_i
=
\operatorname{diag}
\left(
\sigma_{i,1}^{2},
\ldots,
\sigma_{i,n_w}^{2}
\right).
\label{eq:diagonal_covariance}
\end{equation}
This assumption enables the probability mass of an axis-aligned
hyper-rectangle to be evaluated in closed form.
\begin{assumption}[\textbf{Independent Deployment Draws}]
\label{ass:independent_posteriors}
At deployment, each client’s local parameters are drawn independently, where $\Theta_i\sim q_i$ and 
$q(\theta_1,\dots,\theta_n)=\prod_{i=1}^n q_i(\theta_i)$.
\end{assumption}

Assumption~\ref{ass:independent_posteriors} allows the probability of any joint
event across clients to be factorized as the product of the corresponding
local posterior probabilities. When the joint probabilities are directly
available, the set‑based safety argument remains valid without this
independence assumption.

\subsection{Local Hyper-rectangle Construction}
\label{subsec:local_cells}

Each client constructs a finite collection of candidate regions in its local
weight space. For a posterior sample \(w_{i,k}^{*}\sim\mathcal{N}(\mu_i,\Sigma_i)\),
client $i$ forms the hyper-rectangle:
\begin{equation}
H_{i,k}
=
\prod_{d=1}^{n_w}
\left[
w_{i,k,d}^{*}
-
\gamma\sigma_{i,d},
;
w_{i,k,d}^{*}
+
\gamma\sigma_{i,d}
\right],
\label{eq:local_hyperrectangle}
\end{equation}
where $\gamma > 0$ controls the expansion factor defining the size of the candidate region. Sampling the center from the posterior directs the search toward regions of relatively high probability mass. Scaling the interval widths by the local posterior standard deviations also accounts for differences in uncertainty across model parameters.

Each client retains a pairwise-disjoint collection
\(\mathcal{C}_i=\{H_{i,1},\ldots,H_{i,k}\}\).
Pairwise disjointness is imposed because posterior masses of overlapping
regions cannot be added directly without correcting for repeated mass.
Rejecting an overlapping candidate may reduce certified coverage but does
not affect the validity of the final lower bound.

For a diagonal Gaussian posterior and a hyper-rectangle denoted by its lower and upper dimensional bounds $l_{i,k,d}$ and $u_{i,k,d}$:
\begin{equation}
H_{i,k}
=
\prod_{d=1}^{n_w}
[\ell_{i,k,d},u_{i,k,d}],
\end{equation}
its local posterior mass is
\begin{equation}
\begin{aligned}
p_{i,k}
:= q_i(H_{i,k}),
\end{aligned}
\label{eq:local_cell_probability}
\end{equation}
Equivalently, the mass may be written using the error function (erf) as:
\begin{equation}
p_{i,k}
= \prod_{d=1}^{n_w} \frac{1}{2}
\biggl[
\operatorname{erf}\!\left( \frac{u_{i,k,d} - \mu_{i,d}}{\sqrt{2}\,\sigma_{i,d}} \right)
-
\operatorname{erf}\!\left( \frac{\ell_{i,k,d} - \mu_{i,d}}{\sqrt{2}\,\sigma_{i,d}} \right)
\biggr].
\label{eq:local_cell_probability_erf}
\end{equation}

\subsection{Safety Verification via IBP}
\label{subsec:ibp_verification}

Interval bound propagation (IBP) is used to verify bounded input and weight regions. For scalar weights $w \in [w^L, w^U]$ and inputs $x \in [x^L, x^U]$, the output interval $[t_{min}, t_{max}]$ of their product is given by:
\begin{align}
t_{\min} &= \min\{ w^{L}x^{L},\; w^{L}x^{U},\; w^{U}x^{L},\; w^{U}x^{U} \}, \label{eq:ibp_min} \\
t_{\max} &= \max\{ w^{L}x^{L},\; w^{L}x^{U},\; w^{U}x^{L},\; w^{U}x^{U} \}. \label{eq:ibp_max}
\end{align}
The interval bounds are propagated recursively through all network layers,
yielding certified lower and upper bounds on each output logit. For a
candidate parameter hyper-rectangle \(H\subseteq\mathbb{R}^{n_w}\) and an
admissible input set \(\mathcal{X}\), let $\underline{z}_{r}(H,\mathcal{X})$ and $\overline{z}_{r}(H,\mathcal{X})$ denote the IBP lower and upper bounds, respectively, for the output logit
associated with class \(r\). These bounds satisfy
\begin{equation}
    \underline{z}_{r}(H,\mathcal{X})
    \leq
    f_{\theta,r}(x')
    \leq
    \overline{z}_{r}(H,\mathcal{X}),
    \quad
    \forall \theta\in H,
    \forall x'\in\mathcal{X}.
    \label{eq:ibp_output_enclosure}
\end{equation}
Let \(c\) denote the reference class. The parameter region \(H\) is certified
safe whenever the smallest possible logit of class \(c\) remains at least
\(\delta\) larger than the largest possible logit of every competing class:
\begin{equation}
    \underline{z}_{c}(H,\mathcal{X})
    -
    \overline{z}_{r}(H,\mathcal{X})
    \geq
    \delta,
    \qquad
    \forall r\in\{1,\ldots,K\}\setminus\{c\}.
    \label{eq:ibp_verification_condition}
\end{equation}
Therefore, every network parameterized by a weight vector in \(H\) assigns
class \(c\) a logit margin of at least \(\delta\) over all competing classes
for every admissible input in \(\mathcal{X}\).

Local IBP verification may be used to screen candidate cells before they are
communicated. However, local safety alone does not imply global safety after
aggregation. Therefore, each combined global region must be verified again
after the aggregation step.

\begin{assumption}[\textbf{Correctness of the Verifier}]
\label{ass:verifier_soundness}
For any parameter region $H$, if the verifier declares $H$ to be safe (denoted by $\text{SAFE}(H) = 1$), then every parameter vector in $H$ satisfies the safety requirement:
\begin{equation}
    \operatorname{SAFE}(H)=1
    \quad\Longrightarrow\quad
    H\subseteq\mathcal{S},
    \label{eq:sound_verifier}
\end{equation}
where \(\mathcal{S}\) denotes the true safe parameter set.
\end{assumption}

The verifier may reject some regions that are actually safe because its
bounds are conservative. This may reduce the certified probability and make
the resulting lower bound less tight. However, every region accepted by the
verifier is guaranteed to be safe.

\subsection{One-Shot FedAvg Deployment}
\label{subsec:fedavg_aggregation}

We instantiate the general deployment map using FedAvg:
\begin{equation}
\Theta_g
=
\sum_{i=1}^{n}
\alpha_i\Theta_i,
\qquad
\alpha_i\geq0,
\qquad
\sum_{i=1}^{n}\alpha_i=1.
\label{eq:fedavg_aggregation}
\end{equation}
The coefficients $\alpha_i$ may be uniform or proportional to the local dataset sizes. Consider one selected cell from every client, indexed by the tuple
\(k=(k_1,\ldots,k_n)\).
The corresponding joint event is
\(E_k=\{\Theta_1\in H_{1,k_1},\ldots,\Theta_n\in H_{n,k_n}\}\).
Under Assumption~\ref{ass:independent_posteriors}, its probability is:
\begin{equation}
\Pr(E_{k})
=
\prod_{i=1}^{n}
p_{i,k_i}.
\label{eq:joint_event_probability}
\end{equation}
Because the FedAvg weights are nonnegative, the complete image of the joint
hyper-rectangle is exactly:
\begin{equation}
G_{k}
=
\prod_{d=1}^{n_w}
\left[
\sum_{i=1}^{n}
\alpha_i\ell_{i,k_i,d},
\sum_{i=1}^{n}
\alpha_i u_{i,k_i,d}
\right].
\label{eq:global_aggregated_cell}
\end{equation}
Equation~\eqref{eq:global_aggregated_cell} is exact rather than an outer
approximation. Thus, the FedAvg operation introduces no additional interval
relaxation when the local regions are axis-aligned hyper-rectangles and the
aggregation weights are nonnegative.
A tuple is accepted only when its complete aggregated image satisfies
$
\operatorname{SAFE}
(G_{k})
=
1.
$
Consider the following:
\begin{equation}
\mathcal{K}_{\mathrm{safe}}
=
\left\{
k:
\operatorname{SAFE}
(G_{k})=1
\right\}
\label{eq:safe_tuple_set}
\end{equation}
denote the set of accepted tuples.

\subsection{Deployment-Consistent Safety Certificate}
\label{subsec:global_certificate}

The global safe set is the verifier-defined set $\mathcal{S}$. It is not defined as the union of the local certified sets; instead, local regions are combined through the deployment rule, and their aggregated images are verified directly. 

\begin{theorem}[\textbf{FedAvg Safety Lower Bound}]
\label{thm:transported_certificate}
Suppose that the client posterior samples are independent, the retained local
weight regions are pairwise disjoint, and the verifier satisfies
Assumption~\ref{ass:verifier_soundness}. Then,
\begin{equation}
    L_{\mathrm{safe}}^{\mathrm{FA}}
    :=
    \sum_{\boldsymbol{k}\in\mathcal{K}_{\mathrm{safe}}}
    \prod_{i=1}^{n} p_{i,k_i}
    \label{eq:transported_certificate}
\end{equation}
is a valid lower bound on the probability that the model deployed by FedAvg is
safe. In particular,
\begin{equation}
    L_{\mathrm{safe}}^{\mathrm{FA}}
    \leq
    \Pr\!\left(
        \sum_{i=1}^{n}\alpha_i\Theta_i
        \in\mathcal{S}
    \right)
    =
    P_{\mathrm{safe}}^{g}
    \label{eq:main_certificate}
\end{equation}
where \(\mathcal{K}_{\mathrm{safe}}\) denotes the set of client-region
combinations whose complete FedAvg images are certified safe, and
\(p_{i,k_i}\) is the posterior probability of the selected region at client
\(i\).
\end{theorem}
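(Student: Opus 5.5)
The proof is a set-inclusion and additivity argument on the joint probability space of $(\Theta_1,\ldots,\Theta_n)$. For each accepted tuple $\boldsymbol{k}\in\mathcal{K}_{\mathrm{safe}}$, I would show that the joint event $E_{\boldsymbol{k}}$ is contained in the deployment safety event $\{\sum_i\alpha_i\Theta_i\in\mathcal{S}\}$. I would then show that the events $E_{\boldsymbol{k}}$ for distinct tuples are pairwise disjoint. Finally, I would sum their probabilities using the factorization \eqref{eq:joint_event_probability}. Monotonicity and finite additivity of probability then give \eqref{eq:main_certificate}.

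\textbf{Step 1 (image containment).} Fix $\boldsymbol{k}$ and suppose $\Theta_i\in H_{i,k_i}$ for all $i$, so that $\ell_{i,k_i,d}\le\Theta_{i,d}\le u_{i,k_i,d}$ coordinatewise. Since $\alpha_i\ge 0$, multiplying by $\alpha_i$ and summing over $i$ preserves both inequalities. Hence every coordinate of $\sum_i\alpha_i\Theta_i$ lies in $[\sum_i\alpha_i\ell_{i,k_i,d},\sum_i\alpha_i u_{i,k_i,d}]$, that is, $\sum_i\alpha_i\Theta_i\in G_{\boldsymbol{k}}$. Only this inclusion of the image in $G_{\boldsymbol{k}}$ is needed for soundness; the reverse inclusion, which makes \eqref{eq:global_aggregated_cell} exact, concerns tightness and not validity. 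Because $\operatorname{SAFE}(G_{\boldsymbol{k}})=1$, Assumption~\ref{ass:verifier_soundness} gives $G_{\boldsymbol{k}}\subseteq\mathcal{S}$. Therefore $E_{\boldsymbol{k}}\subseteq\{\sum_i\alpha_i\Theta_i\in\mathcal{S}\}$.

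\textbf{Step 2 (disjointness).} Take distinct tuples $\boldsymbol{k}\neq\boldsymbol{k}'$. They differ in some coordinate $j$, so $k_j\neq k'_j$. Since the retained cells $H_{j,k_j}$ and $H_{j,k'_j}$ are disjoint, no realization can have $\Theta_j$ in both. Hence $E_{\boldsymbol{k}}\cap E_{\boldsymbol{k}'}=\emptyset$. This is the only subtle point: disjointness must be argued at the level of joint events in the product space and not at the level of the images $G_{\boldsymbol{k}}$. The images may well overlap in global parameter space, but the probability is evaluated on $(\Theta_1,\ldots,\Theta_n)$, so that overlap causes no double counting.

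\textbf{Step 3 (summation).} Because $\mathcal{K}_{\mathrm{safe}}$ is finite, finite additivity and monotonicity give
\begin{equation*}
\Pr\!\Big(\sum_{i}\alpha_i\Theta_i\in\mathcal{S}\Big)\ \ge\ \Pr\!\Big(\bigcup_{\boldsymbol{k}\in\mathcal{K}_{\mathrm{safe}}}E_{\boldsymbol{k}}\Big)=\sum_{\boldsymbol{k}\in\mathcal{K}_{\mathrm{safe}}}\Pr(E_{\boldsymbol{k}}).
\end{equation*}
By Assumption~\ref{ass:independent_posteriors}, $\Pr(E_{\boldsymbol{k}})=\prod_i q_i(H_{i,k_i})=\prod_i p_{i,k_i}$, which is exactly $L_{\mathrm{safe}}^{\mathrm{FA}}$. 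Measurability of $\mathcal{S}$ is implicitly assumed; if it is in doubt, I would state the bound with the inner probability, which leaves the argument unchanged.
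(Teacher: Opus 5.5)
Your proof is correct and matches the paper's argument step for step: the inclusion $E_{\boldsymbol{k}}\subseteq\{\sum_i\alpha_i\Theta_i\in\mathcal{S}\}$ through the verified image, disjointness of distinct joint events from per-client disjointness of the retained cells, and then monotonicity, finite additivity and the independence factorization. You go slightly further in two places: you derive $\sum_i\alpha_i\Theta_i\in G_{\boldsymbol{k}}$ explicitly from $\alpha_i\ge 0$, noting that only this inclusion (not the exactness in \eqref{eq:global_aggregated_cell}) is needed, whereas the paper appeals directly to the definition of $\mathcal{K}_{\mathrm{safe}}$; and you spell out why distinct tuples yield disjoint events.
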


\begin{proof}
For each tuple
\(k=(k_1,\ldots,k_n)\in\mathcal{K}_{\mathrm{safe}}\),
consider the joint event
\begin{equation}
    E_{k}
    :=
    \left\{
        \Theta_i\in H_{i,k_i},
        i=1,\ldots,n
    \right\},
    \label{eq:joint_safe_event}
\end{equation}
where \(H_{i,k_i}\) is the selected parameter region of client \(i\).
By the definition of \(\mathcal{K}_{\mathrm{safe}}\), the complete FedAvg
image of the regions associated with \(k\) has been verified to
lie inside the safe set \(\mathcal{S}\). Hence, whenever
\(E_{k}\) occurs, the aggregated parameter vector satisfies:
\begin{equation}
    \sum_{i=1}^{n}\alpha_i\Theta_i
    \in
    \mathcal{S}.
\end{equation}
Therefore,
\begin{equation}
    E_{k}
    \subseteq
    \left\{
        \sum_{i=1}^{n}\alpha_i\Theta_i
        \in\mathcal{S}
    \right\},
    \qquad
    \forall
    k\in\mathcal{K}_{\mathrm{safe}}.
    \label{eq:safe_event_inclusion}
\end{equation}
Taking the union over all verified tuples gives
\begin{equation}
    \bigcup_{k\in\mathcal{K}_{\mathrm{safe}}}
    E_{k}
    \subseteq
    \left\{
        \sum_{i=1}^{n}\alpha_i\Theta_i
        \in\mathcal{S}
    \right\}.
    \label{eq:verified_union_inclusion}
\end{equation}
Consequently,
\begin{equation}
    \Pr\left(
        \bigcup_{k\in\mathcal{K}_{\mathrm{safe}}}
        E_{k}
    \right)
    \leq
    \Pr\left(
        \sum_{i=1}^{n}\alpha_i\Theta_i
        \in\mathcal{S}
    \right).
    \label{eq:probability_inclusion}
\end{equation}
Because the retained parameter regions of each client are pairwise disjoint,
two distinct tuples correspond to disjoint joint events. Thus,
\begin{equation}
    \Pr\left(
        \bigcup_{k\in\mathcal{K}_{\mathrm{safe}}}
        E_{k}
    \right)
    =
    \sum_{k\in\mathcal{K}_{\mathrm{safe}}}
    \Pr(E_{k}).
    \label{eq:disjoint_event_sum}
\end{equation}
Moreover, under the independence of the client posterior draws,
the probability of each joint event factorizes as
\begin{align}
    \Pr(E_{k})
    &=
    \Pr\left(
        \Theta_1\in H_{1,k_1},
        \ldots,
        \Theta_n\in H_{n,k_n}
    \right) \nonumber\\
    &=
    \prod_{i=1}^{n}
    \Pr\left(
        \Theta_i\in H_{i,k_i}
    \right) \nonumber\\
    &=
    \prod_{i=1}^{n}p_{i,k_i}.
    \label{eq:joint_event_factorization}
\end{align}

Substituting~\eqref{eq:joint_event_factorization} into
\eqref{eq:disjoint_event_sum} yields
\begin{equation}
    \Pr\left(
        \bigcup_{k\in\mathcal{K}_{\mathrm{safe}}}
        E_{k}
    \right)
    =
    \sum_{k\in\mathcal{K}_{\mathrm{safe}}}
    \prod_{i=1}^{n}p_{i,k_i}
    =
    L_{\mathrm{safe}}^{\mathrm{FA}}.
    \label{eq:certificate_as_probability}
\end{equation}

Combining~\eqref{eq:probability_inclusion} and
\eqref{eq:certificate_as_probability}, we obtain
\begin{equation}
    L_{\mathrm{safe}}^{\mathrm{FA}}
    \leq
    \Pr\left(
        \sum_{i=1}^{n}\alpha_i\Theta_i
        \in\mathcal{S}
    \right)
    =
    P_{\mathrm{safe}}^{g}.
\end{equation}
Hence, \(L_{\mathrm{safe}}^{\mathrm{FA}}\) is a valid lower bound on the
deployment safety probability.
\end{proof} 

Algorithm \ref{alg:certification} summarizes the complete operational procedure for executing this framework. 

\textbf{Algorithm Explanation:} The algorithmic procedure is divided into a parallelizable client-side discretization phase and a server-side aggregation phase. During the local phase (Lines 1-12), each client $i$ iterates up to a sampling budget $N$, drawing a stochastic center point $w_{i,k}^*$ directly from its posterior distribution $q_i$ (Lines 3-4). A hyperrectangle $H_{i,k}$ is then constructed by scaling the interval widths around the center by the local posterior standard deviations using the expansion factor $\gamma$ (Line 5). To ensure probabilities can be summed linearly without inclusion-exclusion corrections, the algorithm checks if the proposed region intersects with any previously stored sets in $\mathcal{C}_i$ (Line 7). If it is completely disjoint, the exact probability mass $p_{i,k}$ is evaluated and the region is stored (Lines 8-9). In the server phase (Lines 13-22), the server constructs a set $\mathcal{K}$ of combination tuples representing the Cartesian product of local regions provided by the clients (Line 13). For every unique combination tuple $k$, the server constructs the complete aggregate image $G_k$ using the FedAvg map (Lines 15-16). The verifier checks if the entire aggregated geometric region is safe (Line 17). If accepted, the tuple is added to the safe tracking set $\mathcal{K}_{safe}$, and its joint probability which has been calculated as the independent product of local masses, is accumulated into the total safety bound $L_{safe}^{FA}$ (Lines 18-19).
	
\begin{algorithm}[t]
		\caption{Deployment-Consistent FedAvg Safety Certificate}
		\label{alg:certification}
		\begin{algorithmic}[1]
			\Require Local posteriors $\{q_i\}_{i=1}^n$, FedAvg coefficients $\{\alpha_i\}_{i=1}^n$, input region $\mathcal{X}$, expansion factor $\gamma$, local sampling budget $N$
			\Ensure Certified lower bound $L_{safe}^{FA}$
			\For{$i=1 \dots n$}
			\State $\mathcal{C}_i \leftarrow \emptyset$
			\For{$k=1 \dots N$}
			\State Sample $w_{i,k}^* \sim q_i$
			\State Construct $H_{i,k}$ using Eq. \eqref{eq:local_hyperrectangle}
			\State Optionally screen $H_{i,k}$ using local IBP
			\If{$H_{i,k}$ is retained and disjoint from all regions in $\mathcal{C}_i$}
			\State $p_{i,k} \leftarrow q_i(H_{i,k})$
			\State $\mathcal{C}_i \leftarrow \mathcal{C}_i \cup \{(H_{i,k}, p_{i,k})\}$
			\EndIf
			\EndFor
			\EndFor
			\State Construct a set $\mathcal{K}$ of at most $B$ unique tuples; enumerate all tuples when feasible, otherwise prioritize them by $\prod_{i=1}^n p_{i,k_i}$
			\State $\mathcal{K}_{safe} \leftarrow \emptyset$, $L_{safe}^{FA} \leftarrow 0$
			\For{\textbf{each} $k = (k_1, \dots, k_n) \in \mathcal{K}$}
			\State Construct $G_k$ using Eq. \eqref{eq:global_aggregated_cell}
			\If{$\text{SAFE}(G_k) == 1$}
			\State $\mathcal{K}_{safe} \leftarrow \mathcal{K}_{safe} \cup \{k\}$
			\State $L_{safe}^{FA} \leftarrow L_{safe}^{FA} + \prod_{i=1}^n p_{i,k_i}$
			\EndIf
			\EndFor
        
        \Return $L_{safe}^{FA}$
		\end{algorithmic}
	\end{algorithm}

\subsection{Direct Certification Under Global Posterior Laws}
\label{subsec:direct_global_certification}

In addition to the transported certificate, we consider direct certification
under a global distribution defined in the deployed parameter space. This
approach constructs and verifies weight regions directly under the selected
global posterior, rather than transporting combinations of local regions
through the aggregation map.
Assume that the client parameters are independently distributed as
\begin{equation}
    \Theta_i
    \sim
    \mathcal{N}(\mu_i,\Sigma_i),
    \qquad
    i=1,\ldots,n.
    \label{eq:local_gaussian_assumption}
\end{equation}
For FedAvg,
\begin{equation}
    \Theta_g^{\mathrm{FA}}
    =
    \sum_{i=1}^{n}\alpha_i\Theta_i,
    \qquad
    \sum_{i=1}^{n}\alpha_i=1,
    \label{eq:fedavg_random_model}
\end{equation}
the induced global distribution is Gaussian:
\begin{equation}
    q_g^{\mathrm{FA}}
    =
    \mathcal{N}
    \left(
        \mu_g^{\mathrm{FA}},
        \Sigma_g^{\mathrm{FA}}
    \right),
    \label{eq:fedavg_global_posterior}
\end{equation}
with
\begin{equation}
		\mu_g^{FA} = \sum_{i=1}^n \alpha_i \mu_i, \quad \Sigma_g^{FA} = \sum_{i=1}^n \alpha_i^2 \Sigma_i. \label{eq:fedavg_params}
\end{equation}

Let
\(\{C_{j,t}^{\mathrm{FA}}\}_{t=1}^{T_j}\) be a collection of pairwise-disjoint
global hyper-rectangles constructed under \(q_g^{\mathrm{FA}}\) for safety
property \(j\). Each region is retained only if it is verified to satisfy the
prescribed safety specification over its complete extent. The corresponding
direct FedAvg certificate is
\begin{equation}
    L_{\mathrm{safe},j}^{\mathrm{dir,FA}}
    :=
    \sum_{t=1}^{T_j}
    q_g^{\mathrm{FA}}
    \left(
        C_{j,t}^{\mathrm{FA}}
    \right)
    \leq
    q_g^{\mathrm{FA}}(\mathcal{S}_j)
    =
    P_{\mathrm{safe},j}^{g}.
    \label{eq:direct_fedavg_certificate}
\end{equation}
The transported and direct FedAvg certificates therefore lower-bound the same
deployment-level safety probability. They differ, however, in the regions
used to construct the bound. The transported certificate begins with events
under the local client posteriors and verifies their images after aggregation,
whereas the direct certificate searches for safe regions directly under the
FedAvg-induced global distribution. Since these procedures generally cover
different subsets of the relevant probability space, neither certificate is
guaranteed to dominate the other.

We also consider Product-of-Gaussians (PoG) fusion as a separate global
posterior baseline:
\begin{equation}
    q_g^{\mathrm{PoG}}(\theta)
    \propto
    \prod_{i=1}^{n}q_i(\theta).
    \label{eq:pog_posterior}
\end{equation}
For Gaussian local factors, the resulting distribution is
\begin{equation}
    q_g^{\mathrm{PoG}}
    =
    \mathcal{N}
    \left(
        \mu_g^{\mathrm{PoG}},
        \Sigma_g^{\mathrm{PoG}}
    \right),
    \label{eq:pog_global_posterior}
\end{equation}
where
\begin{align}
    \Sigma_g^{\mathrm{PoG}}
    &=
    \left(
        \sum_{i=1}^{n}\Sigma_i^{-1}
    \right)^{-1},
    \label{eq:pog_covariance}\\
    \mu_g^{\mathrm{PoG}}
    &=
    \Sigma_g^{\mathrm{PoG}}
    \sum_{i=1}^{n}
    \Sigma_i^{-1}\mu_i.
    \label{eq:pog_mean}
\end{align}

Let
\(\{C_{j,t}^{\mathrm{PoG}}\}_{t=1}^{T_j^{\mathrm{PoG}}}\) denote
pairwise-disjoint regions constructed under \(q_g^{\mathrm{PoG}}\) and
verified safe for property \(j\). The direct PoG certificate is
\begin{equation}
    L_{\mathrm{safe},j}^{\mathrm{dir,PoG}}
    :=
    \sum_{t=1}^{T_j^{\mathrm{PoG}}}
    q_g^{\mathrm{PoG}}
    \left(
        C_{j,t}^{\mathrm{PoG}}
    \right)
    \leq
    q_g^{\mathrm{PoG}}(\mathcal{S}_j).
    \label{eq:direct_pog_certificate}
\end{equation}

The PoG certificate characterizes safety under the PoG fusion distribution.
It should not be interpreted as a certificate for FedAvg deployment, since
FedAvg and PoG define different random global models and, consequently,
different safety probabilities.

\section{Experiments}
\label{sec:experiments}

We conduct a controlled evaluation of three quantities that have distinct
probabilistic meanings: direct certification under a selected global posterior,
the transported FedAvg certificate derived from local posterior events, and
Monte Carlo acceptance of the input verifier. The experiments examine whether
the transported certificate is non-vacuous, how certification varies with
architecture and data partitioning, and whether FedAvg and
Product-of-Gaussians (PoG) fusion exhibit consistent differences.

\subsection{Experimental setup}
\label{subsec:experimental_setup}

We evaluate MNIST \cite{lecun1998gradient} and Fashion-MNIST
\cite{xiao2017fashion} using random subsets of 12,000 training examples and
2,000 test examples. The training subset is partitioned among
\(n\in\{2,3,5\}\) clients using label-Dirichlet sampling
\cite{hsu2019measuring} with
\[
    \alpha\in\{0.5,0.6,0.7,0.9,1,2,5,7,10\}.
\]
Smaller values of \(\alpha\) produce more label-skewed client datasets. All
clients share the same network parameterization and initialization. We consider
fully connected Bayesian neural networks with hidden architectures
\(1\times64\), \(1\times128\), and \(2\times64\), using ReLU activations.

Each client trains a diagonal mean-field posterior for five epochs using Adam
\cite{kingma2014adam}, a learning rate of \(10^{-3}\), batch size 128, a
standard-normal prior, and a Bayes-by-Backprop objective with KL coefficient
\(10^{-4}\) \cite{blundell2015weight}. The post-training diagonal standard deviation is
fixed to \(10^{-5}\). The experiment is therefore designed as a controlled study of aggregation geometry and certificate construction rather than a study of posterior calibration. The selected posterior scale enables a consistent analysis of deployment-consistent certification behavior across federated configurations. Implementation parameters are detailed in Table \ref{tab:experimental_protocol}.

\begin{table}[h]
\centering
\caption{Implementation parameters used in the reported experiments.}
\label{tab:experimental_protocol}
\scriptsize
\setlength{\tabcolsep}{3.2pt}
\renewcommand{\arraystretch}{1.08}
\begin{tabular}{lr@{\qquad}lr}
\toprule
Training subset & 12,000
& Test subset & 2,000\\
Local epochs & 5
& Batch size & 128\\
Learning rate & \(10^{-3}\)
& KL coefficient & \(10^{-4}\)\\
Posterior scale & \(10^{-5}\)
& Properties/config. & 50\\
Input radius & \(10^{-3}\)
& Logit margin & 0\\
Centers/\(\gamma\) & 200
& \(\gamma\) values & 3--7\\
Direct MC/property & 300
& Aggregate MC/property & 3,000\\
Cells/client/property & 8
& Tuple budget/property & 20,000\\
\bottomrule
\end{tabular}
\end{table}

We compare two global laws. Equal-weight FedAvg uses
\(\alpha_i=1/n\) and the corresponding Gaussian pushforward. PoG uses the
uncorrected product of the local Gaussian approximations and is treated as a
precision-fusion baseline. Direct certificates are constructed independently
under both global laws. The transported local-event certificate is reported
only for FedAvg, because the implemented interval transport uses the FedAvg
parameter map.
For each configuration, 50 correctly classified test inputs define pointwise
robustness properties with input radius \(\epsilon=10^{-3}\) and logit margin
\(\delta=0\). Candidate weight boxes are generated from 200 posterior samples
for each
\[
    \gamma\in\{3,4,5,6,7\}.
\]
At most eight pairwise-disjoint cells are retained per client and property.
The FedAvg transport procedure evaluates at most 20,000 unique cell tuples per
property. Direct MC--IBP uses 300 global posterior samples; the separate
aggregated-sampling diagnostic uses 3,000 samples.

Accuracy is evaluated using the posterior-mean network. For a selected global
law, the direct certificate is the posterior mass of the disjoint global boxes
verified by IBP. The quantity stored by the implementation as
\texttt{empirical\_safety} is reported here as MC--IBP: it is the fraction of
sampled weight vectors whose complete input region is accepted by IBP. It is
not an exact safety probability and is not a formal upper bound.
All property-level quantities are first macro-averaged within each
configuration. Tables report the mean and sample standard deviation across the
available client-count and Dirichlet configurations. These deviations describe
variation across federation settings; they are not confidence intervals over
independent training repetitions.

\subsection{Simulation results and discussion}
\label{subsec:results}

Table~\ref{tab:heterogeneity_grouped} reports predictive accuracy, local certification,
direct global certification, MC--IBP acceptance, and runtime for PoG and
equal-weight FedAvg. The transported certificate is reported only for FedAvg,
because the implemented local-event transport follows the FedAvg parameter
map. All probability values are macro-averaged over the selected robustness
properties and are reported as percentages. The standard deviations summarize
variation across client-count and Dirichlet-partition configurations rather
than repeated training seeds.

\begin{table*}[t]
\centering
\caption{PoG and FedAvg results under different heterogeneity regimes. 
Non-IID and IID groups are formed according to the Dirichlet concentration parameter, and entries report mean $\pm$ standard deviation across the corresponding configurations.}
\label{tab:heterogeneity_grouped}
\scriptsize
\setlength{\tabcolsep}{1.55pt}
\renewcommand{\arraystretch}{0.96}
\resizebox{\textwidth}{!}{%
\begin{tabular}{llclrrrrr@{\hspace{4pt}}rrrrrr}
\toprule
& & & & \multicolumn{5}{c}{\textbf{PoG}} & \multicolumn{6}{c}{\textbf{FedAvg}} \\
\cmidrule(lr){5-9}\cmidrule(lr){10-15}
\textbf{Arch.} & \textbf{Dataset} & \textbf{Clients} & \textbf{Heterogeneity} & \textbf{Acc.(\%)} & $\boldsymbol{L^{\mathrm{loc}}(\%)}$ & $\boldsymbol{L^{\mathrm{dir}}(\%)}$ & \textbf{MC--IBP(\%)} & \textbf{Time(s)} & \textbf{Acc.(\%)} & $\boldsymbol{L^{\mathrm{loc}}(\%)}$ & $\boldsymbol{L^{\mathrm{tr}}(\%)}$ & $\boldsymbol{L^{\mathrm{dir}}(\%)}$ & \textbf{MC--IBP(\%)} & \textbf{Time(s)} \\
\midrule
\multirow{12}{*}{\(1\times64\)} & \multirow{6}{*}{MNIST} & \multirow{2}{*}{2} & Non-IID & \(70.11\pm5.68\) & \(74.04\pm5.01\) & \(92.54\pm2.77\) & \(96.67\pm2.89\) & \(333.46\pm14.09\) & \(70.11\pm5.68\) & \(74.04\pm5.01\) & \(52.85\pm11.63\) & \(92.54\pm2.77\) & \(96.67\pm2.89\) & \(330.97\pm11.57\) \\
 &  &  & IID & \(86.96\pm0.34\) & \(94.21\pm2.41\) & \(92.56\pm2.74\) & \(96.67\pm2.89\) & \(362.41\pm6.08\) & \(86.96\pm0.34\) & \(94.21\pm2.41\) & \(88.67\pm4.67\) & \(92.56\pm2.74\) & \(96.67\pm2.89\) & \(361.75\pm5.46\) \\
\addlinespace[0.5pt]
 &  & \multirow{2}{*}{3} & Non-IID & \(66.31\pm7.29\) & \(61.64\pm1.92\) & \(95.73\pm0.00\) & \(100.00\pm0.00\) & \(411.81\pm4.66\) & \(66.31\pm7.29\) & \(61.64\pm1.92\) & \(14.95\pm14.36\) & \(95.73\pm0.00\) & \(100.00\pm0.00\) & \(411.93\pm3.88\) \\
 &  &  & IID & \(83.93\pm0.61\) & \(90.60\pm3.73\) & \(94.11\pm2.81\) & \(100.00\pm0.00\) & \(482.04\pm9.55\) & \(83.93\pm0.61\) & \(90.60\pm3.73\) & \(76.25\pm7.91\) & \(94.11\pm2.81\) & \(100.00\pm0.00\) & \(478.63\pm9.05\) \\
\addlinespace[0.5pt]
 &  & \multirow{2}{*}{5} & Non-IID & \(59.51\pm10.87\) & \(58.81\pm10.18\) & \(89.32\pm7.40\) & \(96.67\pm5.77\) & \(596.03\pm49.09\) & \(59.51\pm10.87\) & \(58.81\pm10.18\) & \(8.40\pm7.27\) & \(89.32\pm7.40\) & \(96.67\pm5.77\) & \(598.37\pm54.26\) \\
 &  &  & IID & \(77.00\pm4.30\) & \(72.97\pm9.13\) & \(82.92\pm5.64\) & \(90.00\pm5.00\) & \(638.79\pm38.71\) & \(77.00\pm4.30\) & \(72.97\pm9.13\) & \(23.67\pm18.80\) & \(82.92\pm5.64\) & \(90.00\pm5.00\) & \(633.53\pm41.02\) \\
\addlinespace[1.5pt]
\cmidrule(lr){2-15}
 & \multirow{6}{*}{FMNIST} & \multirow{2}{*}{2} & Non-IID & \(67.96\pm3.40\) & \(74.57\pm2.41\) & \(93.12\pm2.76\) & \(98.33\pm2.89\) & \(337.72\pm5.62\) & \(67.96\pm3.40\) & \(74.57\pm2.41\) & \(54.05\pm2.70\) & \(93.12\pm2.76\) & \(98.33\pm2.89\) & \(330.45\pm4.46\) \\
 &  &  & IID & \(73.51\pm1.08\) & \(80.15\pm2.81\) & \(86.71\pm4.83\) & \(93.33\pm7.64\) & \(339.10\pm1.89\) & \(73.51\pm1.08\) & \(80.15\pm2.81\) & \(63.23\pm7.11\) & \(86.71\pm4.83\) & \(93.33\pm7.64\) & \(335.95\pm2.93\) \\
\addlinespace[0.5pt]
 &  & \multirow{2}{*}{3} & Non-IID & \(51.71\pm13.99\) & \(59.89\pm5.21\) & \(80.28\pm2.75\) & \(86.67\pm5.77\) & \(397.94\pm16.16\) & \(51.71\pm13.99\) & \(59.89\pm5.21\) & \(11.91\pm12.99\) & \(80.28\pm2.75\) & \(86.67\pm5.77\) & \(396.06\pm15.21\) \\
 &  &  & IID & \(70.18\pm1.96\) & \(77.00\pm5.75\) & \(89.93\pm7.34\) & \(95.00\pm5.00\) & \(468.34\pm13.10\) & \(70.18\pm1.96\) & \(77.00\pm5.75\) & \(49.06\pm16.04\) & \(89.93\pm7.34\) & \(95.00\pm5.00\) & \(439.55\pm12.62\) \\
\addlinespace[0.5pt]
 &  & \multirow{2}{*}{5} & Non-IID & \(52.11\pm7.67\) & \(58.31\pm6.25\) & \(83.53\pm12.06\) & \(88.33\pm12.58\) & \(618.10\pm38.92\) & \(52.11\pm7.67\) & \(58.31\pm6.25\) & \(12.33\pm7.18\) & \(83.53\pm12.06\) & \(88.33\pm12.58\) & \(584.64\pm28.46\) \\
 &  &  & IID & \(63.27\pm5.98\) & \(70.86\pm5.64\) & \(86.67\pm8.38\) & \(93.33\pm7.64\) & \(660.88\pm35.31\) & \(63.27\pm5.98\) & \(70.86\pm5.64\) & \(21.89\pm15.64\) & \(86.67\pm8.38\) & \(93.33\pm7.64\) & \(630.57\pm29.41\) \\
\addlinespace[1.5pt]
\midrule
\multirow{12}{*}{\(1\times128\)} & \multirow{6}{*}{MNIST} & \multirow{2}{*}{2} & Non-IID & \(82.82\pm3.66\) & \(76.49\pm2.36\) & \(86.55\pm2.70\) & \(95.00\pm0.00\) & \(691.72\pm3.39\) & \(82.82\pm3.66\) & \(76.49\pm2.36\) & \(57.47\pm2.48\) & \(86.55\pm2.70\) & \(95.00\pm0.00\) & \(687.16\pm4.07\) \\
 &  &  & IID & \(87.16\pm3.48\) & \(86.58\pm8.75\) & \(86.50\pm7.15\) & \(96.68\pm2.88\) & \(1383.51\pm1160.76\) & \(87.16\pm3.48\) & \(86.58\pm8.75\) & \(76.04\pm13.78\) & \(86.50\pm7.15\) & \(96.68\pm2.88\) & \(732.05\pm41.15\) \\
\addlinespace[0.5pt]
 &  & \multirow{2}{*}{3} & Non-IID & \(79.22\pm4.24\) & \(71.24\pm14.78\) & \(86.68\pm5.25\) & \(100.00\pm0.00\) & \(961.69\pm73.14\) & \(79.22\pm4.24\) & \(71.24\pm14.78\) & \(34.95\pm26.79\) & \(86.68\pm5.25\) & \(100.00\pm0.00\) & \(901.83\pm82.10\) \\
 &  &  & IID & \(86.47\pm1.12\) & \(86.28\pm3.59\) & \(92.74\pm0.00\) & \(100.00\pm0.00\) & \(937.33\pm17.54\) & \(86.47\pm1.12\) & \(86.28\pm3.59\) & \(65.75\pm8.39\) & \(92.74\pm0.00\) & \(100.00\pm0.00\) & \(937.97\pm14.50\) \\
\addlinespace[0.5pt]
 &  & \multirow{2}{*}{5} & Non-IID & \(70.24\pm8.49\) & \(53.35\pm3.78\) & \(86.54\pm2.69\) & \(98.33\pm2.89\) & \(1157.96\pm38.24\) & \(70.24\pm8.49\) & \(53.35\pm3.78\) & \(3.55\pm3.49\) & \(86.54\pm2.69\) & \(98.33\pm2.89\) & \(1172.40\pm49.15\) \\
 &  &  & IID & \(82.76\pm2.39\) & \(80.22\pm6.97\) & \(92.74\pm0.00\) & \(100.00\pm0.00\) & \(1367.73\pm75.64\) & \(82.76\pm2.39\) & \(80.22\pm6.97\) & \(41.50\pm19.48\) & \(92.74\pm0.00\) & \(100.00\pm0.00\) & \(1374.61\pm81.24\) \\
\addlinespace[1.5pt]
\cmidrule(lr){2-15}
 & \multirow{6}{*}{FMNIST} & \multirow{2}{*}{2} & Non-IID & \(70.93\pm3.35\) & \(71.37\pm9.73\) & \(82.96\pm0.15\) & \(95.00\pm0.00\) & \(669.95\pm24.46\) & \(70.93\pm3.35\) & \(71.37\pm9.73\) & \(47.63\pm15.72\) & \(82.96\pm0.15\) & \(95.00\pm0.00\) & \(666.94\pm28.03\) \\
 &  &  & IID & \(78.76\pm0.20\) & \(79.90\pm4.84\) & \(81.09\pm2.67\) & \(93.33\pm2.89\) & \(696.31\pm22.22\) & \(78.76\pm0.20\) & \(79.90\pm4.84\) & \(67.90\pm9.99\) & \(81.09\pm2.67\) & \(93.33\pm2.89\) & \(694.15\pm21.34\) \\
\addlinespace[0.5pt]
 &  & \multirow{2}{*}{3} & Non-IID & \(65.11\pm0.04\) & \(70.85\pm4.94\) & \(84.42\pm5.39\) & \(98.33\pm2.89\) & \(4660.86\pm6543.80\) & \(65.11\pm0.04\) & \(70.85\pm4.94\) & \(43.09\pm9.92\) & \(84.42\pm5.39\) & \(98.33\pm2.89\) & \(889.47\pm31.83\) \\
 &  &  & IID & \(75.02\pm0.71\) & \(81.22\pm3.22\) & \(85.94\pm2.62\) & \(93.33\pm2.89\) & \(917.67\pm10.78\) & \(75.02\pm0.71\) & \(81.22\pm3.22\) & \(61.98\pm6.13\) & \(85.94\pm2.62\) & \(93.33\pm2.89\) & \(914.02\pm10.09\) \\
\addlinespace[0.5pt]
 &  & \multirow{2}{*}{5} & Non-IID & \(57.76\pm2.78\) & \(64.14\pm4.07\) & \(87.37\pm8.01\) & \(95.00\pm8.66\) & \(1247.64\pm29.05\) & \(57.76\pm2.78\) & \(64.14\pm4.07\) & \(14.92\pm10.98\) & \(87.37\pm8.01\) & \(95.00\pm8.66\) & \(1240.41\pm28.26\) \\
 &  &  & IID & \(69.11\pm2.71\) & \(79.91\pm4.67\) & \(88.88\pm5.40\) & \(98.33\pm2.89\) & \(1371.36\pm58.89\) & \(69.11\pm2.71\) & \(79.91\pm4.67\) & \(41.46\pm15.27\) & \(88.88\pm5.40\) & \(98.33\pm2.89\) & \(1371.60\pm72.13\) \\
\addlinespace[1.5pt]
\midrule
\multirow{12}{*}{\(2\times64\)} & \multirow{6}{*}{MNIST} & \multirow{2}{*}{2} & Non-IID & \(71.98\pm4.97\) & \(77.79\pm3.70\) & \(84.81\pm12.17\) & \(93.33\pm7.64\) & \(376.46\pm10.73\) & \(73.02\pm6.11\) & \(68.08\pm8.99\) & \(40.11\pm20.74\) & \(78.58\pm2.67\) & \(90.00\pm5.00\) & \(349.64\pm22.42\) \\
 &  &  & IID & \(77.44\pm10.12\) & \(80.21\pm6.94\) & \(78.41\pm5.49\) & \(85.00\pm5.00\) & \(373.59\pm13.96\) & \(83.87\pm3.54\) & \(74.50\pm8.26\) & \(55.47\pm12.20\) & \(78.53\pm9.98\) & \(90.00\pm10.00\) & \(371.13\pm23.70\) \\
\addlinespace[0.5pt]
 &  & \multirow{2}{*}{3} & Non-IID & \(68.51\pm7.15\) & \(58.70\pm6.49\) & \(68.89\pm15.39\) & \(73.33\pm15.28\) & \(433.90\pm31.09\) & \(66.42\pm6.41\) & \(54.95\pm3.69\) & \(5.95\pm6.77\) & \(83.34\pm7.33\) & \(95.00\pm0.00\) & \(428.98\pm7.11\) \\
 &  &  & IID & \(79.98\pm4.01\) & \(69.97\pm1.84\) & \(78.38\pm2.71\) & \(83.33\pm2.89\) & \(453.33\pm7.22\) & \(81.93\pm4.80\) & \(61.94\pm2.42\) & \(26.70\pm13.29\) & \(72.11\pm12.69\) & \(80.00\pm13.23\) & \(441.32\pm8.69\) \\
\addlinespace[0.5pt]
 &  & \multirow{2}{*}{5} & Non-IID & \(52.13\pm14.45\) & \(42.63\pm5.89\) & \(60.82\pm13.79\) & \(71.67\pm11.55\) & \(547.16\pm34.11\) & \(57.93\pm7.56\) & \(38.82\pm7.31\) & \(1.38\pm2.39\) & \(59.35\pm24.16\) & \(71.67\pm15.28\) & \(538.06\pm42.18\) \\
 &  &  & IID & \(69.84\pm4.66\) & \(62.74\pm10.26\) & \(79.97\pm2.76\) & \(86.67\pm7.64\) & \(648.94\pm46.73\) & \(73.98\pm5.85\) & \(48.04\pm7.74\) & \(5.47\pm6.30\) & \(65.72\pm7.26\) & \(83.33\pm2.89\) & \(585.62\pm30.34\) \\
\addlinespace[1.5pt]
\cmidrule(lr){2-15}
 & \multirow{6}{*}{FMNIST} & \multirow{2}{*}{2} & Non-IID & \(54.62\pm8.05\) & \(62.56\pm4.78\) & \(72.05\pm12.66\) & \(85.00\pm13.23\) & \(342.48\pm16.05\) & \(54.62\pm8.05\) & \(62.56\pm4.78\) & \(38.61\pm13.36\) & \(72.05\pm12.66\) & \(85.00\pm13.23\) & \(336.96\pm16.17\) \\
 &  &  & IID & \(72.60\pm1.79\) & \(72.17\pm6.85\) & \(74.40\pm10.34\) & \(87.50\pm3.54\) & \(368.19\pm19.31\) & \(73.13\pm1.57\) & \(72.96\pm5.03\) & \(57.06\pm14.23\) & \(75.20\pm7.44\) & \(90.00\pm5.00\) & \(770.04\pm715.42\) \\
\addlinespace[0.5pt]
 &  & \multirow{2}{*}{3} & Non-IID & \(48.78\pm9.71\) & \(63.01\pm2.44\) & \(72.10\pm4.79\) & \(83.33\pm2.89\) & \(453.48\pm9.66\) & \(48.78\pm9.71\) & \(63.01\pm2.44\) & \(29.66\pm6.82\) & \(72.10\pm4.79\) & \(83.33\pm2.89\) & \(448.57\pm3.78\) \\
 &  &  & IID & \(67.13\pm0.66\) & \(67.28\pm2.24\) & \(71.94\pm13.68\) & \(82.50\pm3.54\) & \(478.99\pm18.62\) & \(68.13\pm1.79\) & \(68.34\pm2.43\) & \(47.31\pm6.71\) & \(71.96\pm9.67\) & \(83.33\pm2.89\) & \(452.17\pm1.70\) \\
\addlinespace[0.5pt]
 &  & \multirow{2}{*}{5} & Non-IID & \(45.93\pm5.39\) & \(47.04\pm7.82\) & \(65.66\pm23.64\) & \(81.67\pm18.93\) & \(574.41\pm36.30\) & \(45.93\pm5.39\) & \(47.04\pm7.82\) & \(2.80\pm4.84\) & \(65.66\pm23.64\) & \(81.67\pm18.93\) & \(573.29\pm36.49\) \\
 &  &  & IID & \(60.57\pm4.01\) & \(58.14\pm7.53\) & \(79.33\pm3.33\) & \(92.50\pm10.61\) & \(667.47\pm59.22\) & \(61.78\pm3.53\) & \(62.46\pm9.19\) & \(20.42\pm21.79\) & \(78.52\pm2.74\) & \(91.67\pm7.64\) & \(662.59\pm31.35\) \\
\bottomrule
\end{tabular}%
}
\end{table*}

The three reported certification quantities have different probabilistic
interpretations. Local certification evaluates individual client posteriors,
whereas transported certification evaluates whether joint local events remain
safe after aggregation. Direct certification instead searches safe regions
directly under the induced global posterior. Figure~\ref{fig:bound_hierarchy}
illustrates this distinction.
\begin{figure}[t]
    \centering
    \includegraphics[height=5.5cm, width=\columnwidth]
   {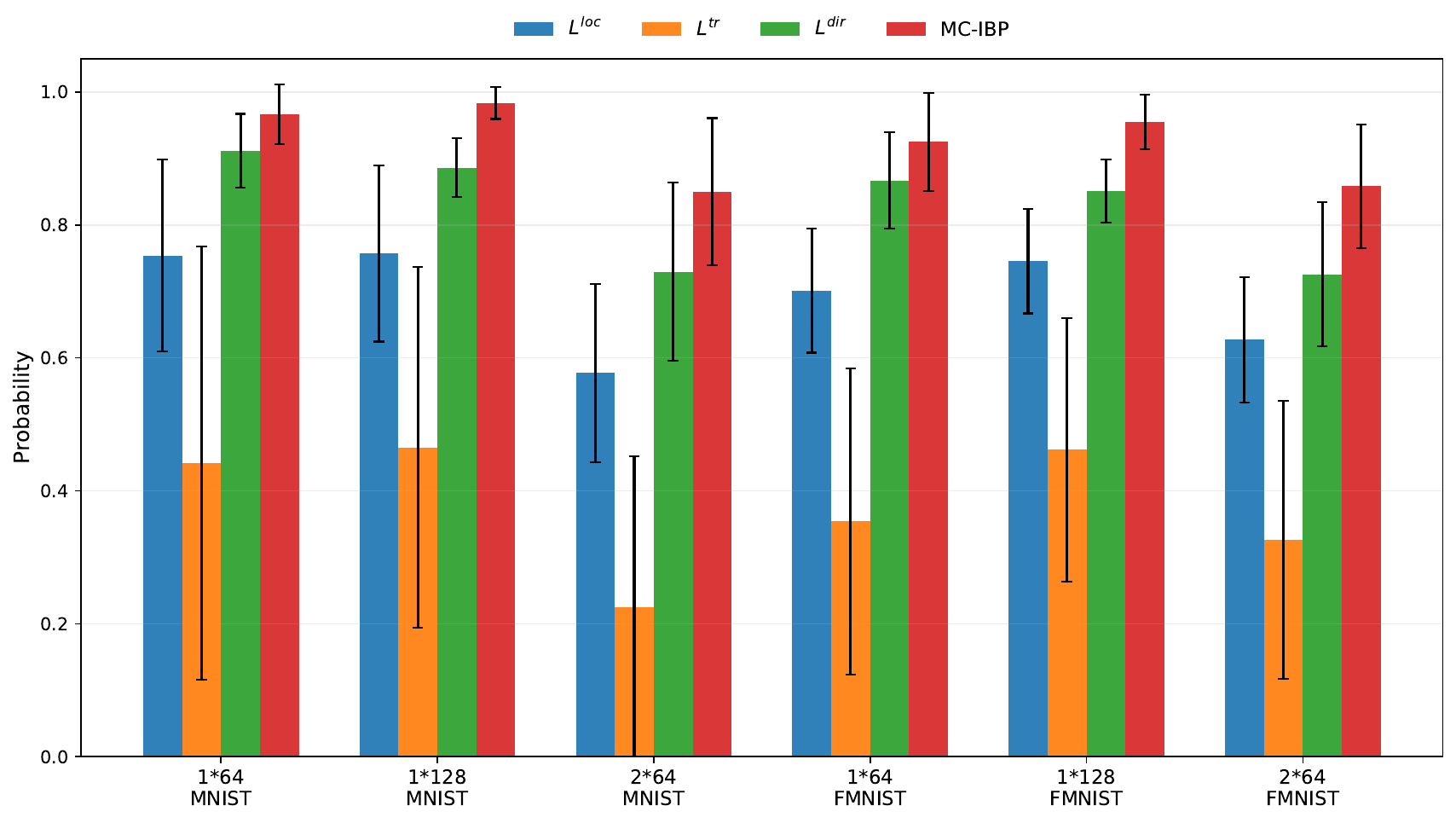}
\caption{FedAvg bound hierarchy over dataset--architecture pairs, comparing $L^{loc}$, $L^{tr}$, $L^{dir}$, and MC-IBP acceptance.}
    \label{fig:bound_hierarchy}
\end{figure}
Figure~\ref{fig:bound_hierarchy} summarizes the relative behavior of the local,
transported, and direct certification bounds across different dataset and
architecture configurations.

\noindent\textbf{a) Direct certification:}
Direct certificates are non-vacuous for every dataset, architecture, and
global law in Table~\ref{tab:heterogeneity_grouped}.
Figure~\ref{fig:calibration} shows a clear positive association between direct
certification and MC--IBP, with all evaluated points above the equality line.
The certified posterior mass ranges
from \(72.05\%\) to \(91.39\%\) for PoG and from \(72.58\%\) to
\(91.20\%\) for FedAvg. MC--IBP is larger than the corresponding direct
certificate in the reported configurations, with differences between
\(5.47\) and \(13.25\) percentage points. This ordering is consistent with
finite safe-cell coverage and the stronger requirement imposed by
region-based certification: a box contributes only when every weight vector
inside it is verified. The difference should not be interpreted as an error
bound on the true safety probability, because MC--IBP is itself based on an
incomplete verifier and a finite posterior sample.
The smallest direct-to-MC differences occur for the \(1\times64\)
architecture. The largest differences occur for the \(2\times64\) model on
Fashion-MNIST. The effect is not completely monotone across architectures:
for example, PoG on MNIST exhibits a smaller difference for \(2\times64\)
than for \(1\times128\). The results therefore indicate architecture-dependent
certifiability, but do not support a claim that depth alone determines
certificate tightness.

\begin{figure}[t]
    \centering
     \includegraphics[height=5.5cm, width=\columnwidth]
   {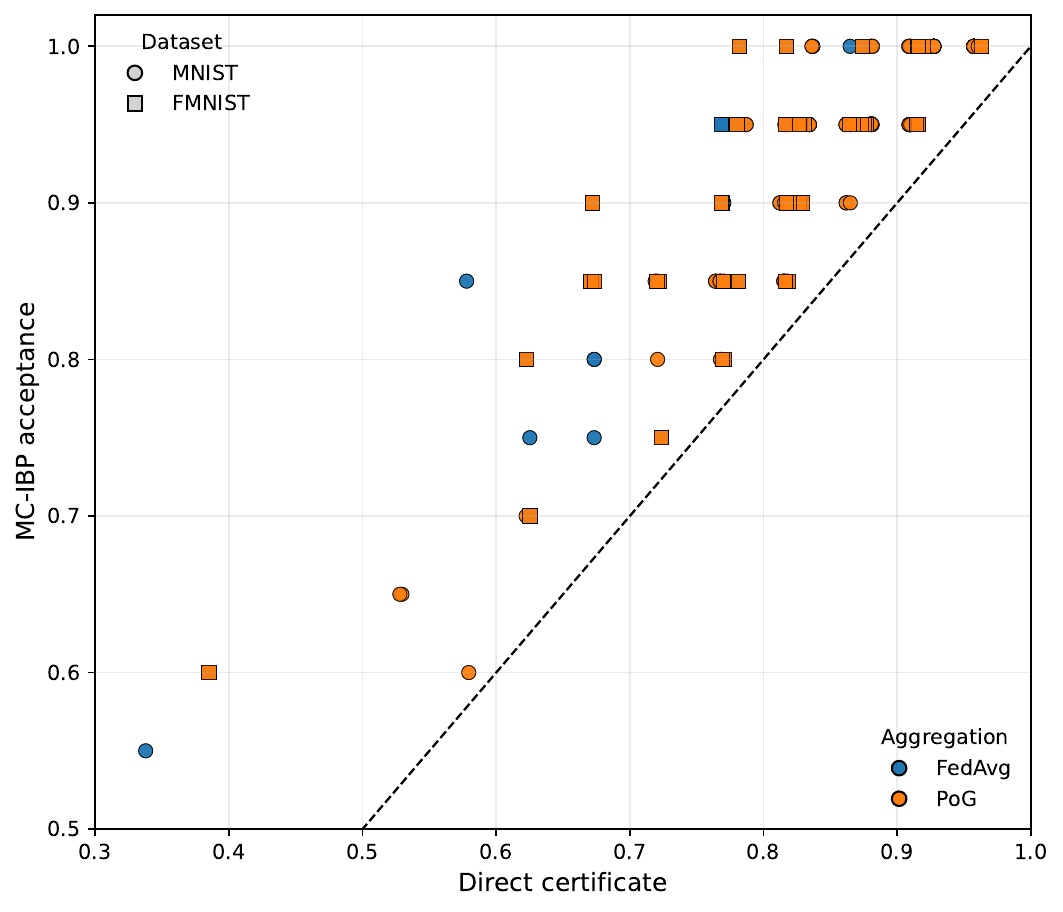}
\caption{Direct certified mass versus MC--IBP acceptance across all evaluated
configurations. The dashed line denotes equality.}
    \label{fig:calibration}
\end{figure}

\noindent\textbf{b) Transported FedAvg certificate:}
The transported certificate is positive in all six dataset--architecture FedAvg configurations, ranging from
\(22.51\%\) to \(46.89\%\). It is more conservative than the direct
FedAvg certificate because certified mass must be represented by retained local
cells, included in the searched tuple set, and verified after transporting the
complete Cartesian product through the FedAvg map.
The transported certificate also varies substantially more across federation
configurations. Its standard deviation ranges from \(18.09\) to \(33.06\)
percentage points, whereas the standard deviation of the direct FedAvg
certificate ranges from \(4.02\) to \(13.59\) points. This larger variability
indicates that transported coverage is more sensitive than direct global
certification to the federation configuration, the retained local cells, and
whether their complete aggregation images remain verifiable. Finite tuple
evaluation may introduce further variability.
Using the aggregate means, the transported certificate ratio relative to the direct FedAvg
certificate ranges approximately from \(31\%\) to \(57\%\). The
smallest ratio occurs for MNIST \(2\times64\), while the largest occurs for
Fashion-MNIST \(1\times128\). These ratios are descriptive measures of the
selected certificate constructions and should not be interpreted as the
fraction of the true safe probability recovered by the method.
To further examine this sensitivity, Fig.~\ref{fig:transport_retention}
reports the transported-to-direct certificate ratio,
\(100\times L^{\mathrm{tr}}/L^{\mathrm{dir}}\), as a function of the
Dirichlet concentration parameter and the number of participating clients.
The retention ratio generally improves as \(\alpha\) increases, indicating
that less heterogeneous client data lead to better agreement between the
transported and direct FedAvg certificates. In contrast, increasing the number
of clients consistently reduces the recovered fraction of direct certified
mass. This pattern suggests that the gap between transported and direct
certification is shaped not only by verifier conservatism, but also by the
geometric compatibility of certified local regions after aggregation. As the
number of clients grows, the Cartesian product of retained local events
becomes more difficult to verify after transport, leading to a larger loss in
certified mass.

\begin{figure}[t]
    \centering
    \includegraphics[width=\columnwidth]{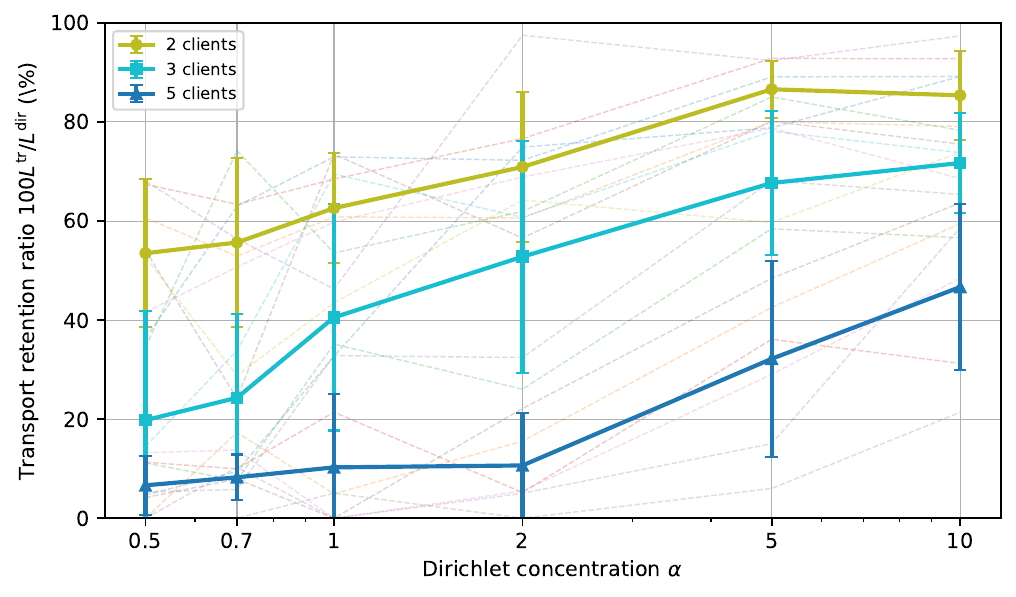}
    \caption{Transport retention ratio (\(100L^{\mathrm{tr}}/L^{\mathrm{dir}}\)) versus Dirichlet concentration for different federation scales. Thin dashed curves show individual dataset--architecture configurations, while the bold curves with error bars show the mean and standard deviation across all six configurations.}
    \label{fig:transport_retention}
\end{figure}

\noindent\textbf{c) Architecture and certification:}
The \(1\times128\) architecture yields the highest mean predictive accuracy
under both global laws and on both datasets. It does not, however, yield the
largest certified posterior mass. The smaller \(1\times64\) network achieves
the strongest direct certificate in all four dataset--method comparisons.
Increasing width from \(1\times64\) to \(1\times128\) improves accuracy but
typically reduces direct certification by several percentage points.
The \(2\times64\) architecture produces a larger reduction in certified mass
and, for FedAvg, the smallest transported certificate on MNIST. This pattern is consistent with the behavior of interval verification
\cite{gowal2018effectiveness,mirman2018differentiable}: in a larger parameter
space, axis-aligned cells may capture less useful posterior mass while remaining
certifiable, and additional layers compound dependency loss between uncertain
weights and hidden activations. The experiment does not
fully separate width and depth effects, but it shows that predictive accuracy
and certifiable posterior mass need not improve together.

\noindent\textbf{d) Dataset effects:}
Fashion-MNIST reduces mean accuracy by approximately \(11\)--\(14\)
percentage points relative to MNIST. The corresponding reduction in direct
certification is smaller. For the one-layer architectures, the decrease is
typically \(3\)--\(5\) points; for \(2\times64\), the direct FedAvg
certificate is nearly unchanged across the two datasets.
This contrast does not imply that the two datasets have comparable global
safety. Properties are defined only for selected correctly classified inputs,
and results are macro-averaged across those properties. The observation instead
shows that test accuracy and the posterior mass satisfying the selected local
robustness specifications measure different aspects of model behavior.

\noindent\textbf{e) PoG and FedAvg:}
Neither global law dominates across all architectures and metrics. For
\(1\times64\), their direct certificates are nearly identical: the difference
is below \(0.25\) points on both datasets. For \(1\times128\), PoG provides
higher accuracy, direct certification, and MC--IBP on MNIST and Fashion-MNIST.
Its direct-certificate advantage is \(3.86\) points on MNIST and \(3.51\)
points on Fashion-MNIST.
The ordering changes for \(2\times64\). FedAvg obtains higher accuracy and
MC--IBP on both datasets. PoG retains a \(2.27\)-point direct-certificate
advantage on MNIST, but is \(0.53\) points lower on Fashion-MNIST. These
architecture-dependent differences do not support a uniform ranking between
the two fusion rules.
Because the posterior standard deviation is fixed after local training, the
results should not be interpreted as evidence that one method provides better
Bayesian uncertainty calibration. They only compare predictive behavior and
certification under the global Gaussian laws used in this controlled study.

\noindent\textbf{f) Federation configuration and computational cost:}
The pooled statistics do not establish a monotone relationship between
certification and either client count or the Dirichlet concentration parameter.
Increasing the number of clients can contract the FedAvg pushforward covariance,
which may improve direct certification, while simultaneously decreasing tuple
masses and enlarging the Cartesian tuple space. These effects act in opposite
directions and cannot be separated from the aggregated table alone.
The \(1\times128\) architecture is the most computationally expensive.
The PoG runtime for Fashion-MNIST \(1\times128\) has a particularly large
standard deviation, \(1593.97\pm2664.92\) seconds, indicating a strongly
skewed runtime distribution or expensive outlying configurations. Mean runtime
alone is therefore insufficient for claiming a consistent computational
advantage between PoG and FedAvg.

\section{Conclusion}
\label{sec:conclusion}

We formulated probabilistic safety certification for one-shot federated Bayesian neural networks under the probability law induced by the deployment rule. The proposed construction transports disjoint local posterior events through the aggregation map and includes their probability only when the complete image is verified safe. For FedAvg with nonnegative weights, the image of axis-aligned cells is exact, so the aggregation step itself introduces no additional geometric relaxation. The experiments produce non-vacuous transported certificates across all
reported dataset--architecture FedAvg aggregates and show that predictive accuracy, direct certified mass, and transported coverage need not follow the same ordering, while neither FedAvg nor PoG uniformly dominates across architectures. Although the theoretical construction is not restricted to the evaluated networks, extending the current implementation to larger models will require tighter weight-space verification and scalable handling of local-event combinations.

\section{Future Work}
\label{sec:Future Work}

While the proposed framework provides a mathematically sound and exact geometric certificate for one-shot FedAvg deployment, evaluating the full Cartesian product of local posterior events introduces a combinatorial challenge. The joint tuple space grows exponentially with the number of participating clients, which limits exhaustive evaluation to smaller federated settings. Future work will therefore focus on replacing exhaustive tuple evaluation with more scalable search strategies without sacrificing the theoretical guarantees. Promising directions include distributed primal-dual optimization, multi-agent reinforcement learning, and stochastic generalized Nash equilibrium methods to identify and prune unverifiable parameter combinations before server-side aggregation.

\bibliographystyle{IEEEtran}
\bibliography{TAIbib}

\end{document}